\newcommand{\remove}[1]{}
\DeclareMathOperator*{\argmin}{arg\,min}
\theoremstyle{plain}
\newtheorem{theorem}{Theorem}
\newtheorem{definition}[theorem]{Definition}
\newtheorem{remark}{Remark}
\title{\LARGE \bf
Probabilistic Safety-Assured Adaptive Merging Control \\for Autonomous Vehicles
}
\author{Yiwei Lyu$^1$, Wenhao Luo$^2$ and John M. Dolan$^3$%,
\thanks{$^*$This work was supported by the CMU Argo AI Center for Autonomous
Vehicle Research.}
\thanks{$^1$The authors are with the Department of Electrical and Computer Engineering, Carnegie Mellon University, Pittsburgh, PA, 15213 USA. Email: {\tt \small yiweilyu@andrew.cmu.edu}}%
\thanks{$^2$The authors are with the Robotics Institute, Carnegie Mellon University, Pittsburgh, PA 15213 USA. Email: {\tt \small \{wenhaol, jmd\}@cs.cmu.edu}}%
}
\begin{document}

\maketitle
\thispagestyle{empty}
\pagestyle{empty}

\begin{abstract}

Autonomous vehicles face tremendous challenges while interacting with human drivers in different kinds of scenarios. Developing control methods with safety guarantees while performing interactions with uncertainty is an ongoing research goal. In this paper, we present a real-time safe control framework using bi-level optimization with Control Barrier Function (CBF) that enables an autonomous ego vehicle to interact with human-driven cars in ramp merging scenarios with a consistent safety guarantee. In order to explicitly address motion uncertainty, we propose a novel extension of control barrier functions to a probabilistic setting with provable chance-constrained safety and analyze the feasibility of our control design. The formulated bi-level optimization framework entails first choosing the ego vehicle's optimal driving style in terms of safety and primary objective, and then minimally modifying a nominal controller in the context of quadratic programming subject to the probabilistic safety constraints. This allows for adaptation to different driving strategies with a formally provable feasibility guarantee for the ego vehicle's safe controller. Experimental results are provided to demonstrate the effectiveness of our proposed approach.

\end{abstract}

\section{Introduction}

Given the fact that self-driving cars (or autonomous vehicles (AV)) will not immediately replace all human-driven cars, they will have to share roads with human drivers for a long time. In the area of autonomous vehicle control and planning \cite{gu2015tunable, Ames2014, luo2016distributed}, safety is always the primary focus. The safety requirement becomes more strict as the scenario complexity increases. Ramp merging is a typical scenario where autonomous vehicles interact with human-driven cars. Human drivers introduce uncertainty into merge scenarios, and AV will collide if they can't plan and be controlled safely. 

Current control methods are not able to guarantee safety and efficiency at the same time. Traditional Automated Cruise Control (ACC)-like distance control methods force the vehicle to brake when distance is less than the specified minimum safety distance, while maintaining the vehicle at a desired driving speed. However, the National Highway Traffic Safety Administration (NHTSA) has reported that these kinds of methods sometimes lead to abrupt or aggressive behavior in response to a lead vehicle's velocity change, which can be very dangerous, especially in the highway ramp merging scenario \cite{nhtsa-2015}. In addition, vehicle control with ACC in the real world can be difficult, since sometimes its objectives can conflict with each other \cite{7040372}. NHTSA categorizes these methods as convenience features, rather than safety features \cite{nhtsa-2015}.

CBF-based methods \cite{7782377, inproceedings, luo2020multi, 9196757, zeng2020safetycritical, 9029446, 7040372, 7857061, 8917473} have become increasingly popular in the control application domain due to their forward-invariant property, which can provide a safety guarantee. However, the forward-invariant property relies on the solution feasibility, i.e., as long as a control solution satisfying the CBF constraints can be found at each time step, then the safety for future time steps can always be guaranteed. In reality, we may not always find such a solution and alternative solutions include switching to a full braking mode \cite{7857061, celi2019deconfliction}. However,
this could lead to serious consequences in autonomous driving scenarios, e.g., when there is another vehicle behind the ego vehicle, or it is too late for the AVs to stop. Therefore, systematic solution feasibility consideration becomes critical for CBF applications, particularly in autonomous driving that involves complicated safety objectives in the presence of varied driving behaviors.

The scenario studied here is the highway on-ramp merging problem, where the goal is to enable the AV to merge with human-driven cars safely and efficiently. We aim to achieve a multiple-vehicle adaptive merging strategy with safety assurance. Our \textbf{main contributions} are: 1) a bi-level optimization-based control framework with chance-constrained CBF constraints to enable AVs to achieve consistent safe ramp merging behavior; 2) a novel extension of CBF-based safe control constraints to a probabilistic setting for stochastic system dynamics with formally provable safety guarantee; 3) a theoretical analysis to discuss solution feasibility guarantee and design factors reflecting different vehicle behaviors.

\begin{figure}
    \centering
    \includegraphics[width = 0.5\linewidth]{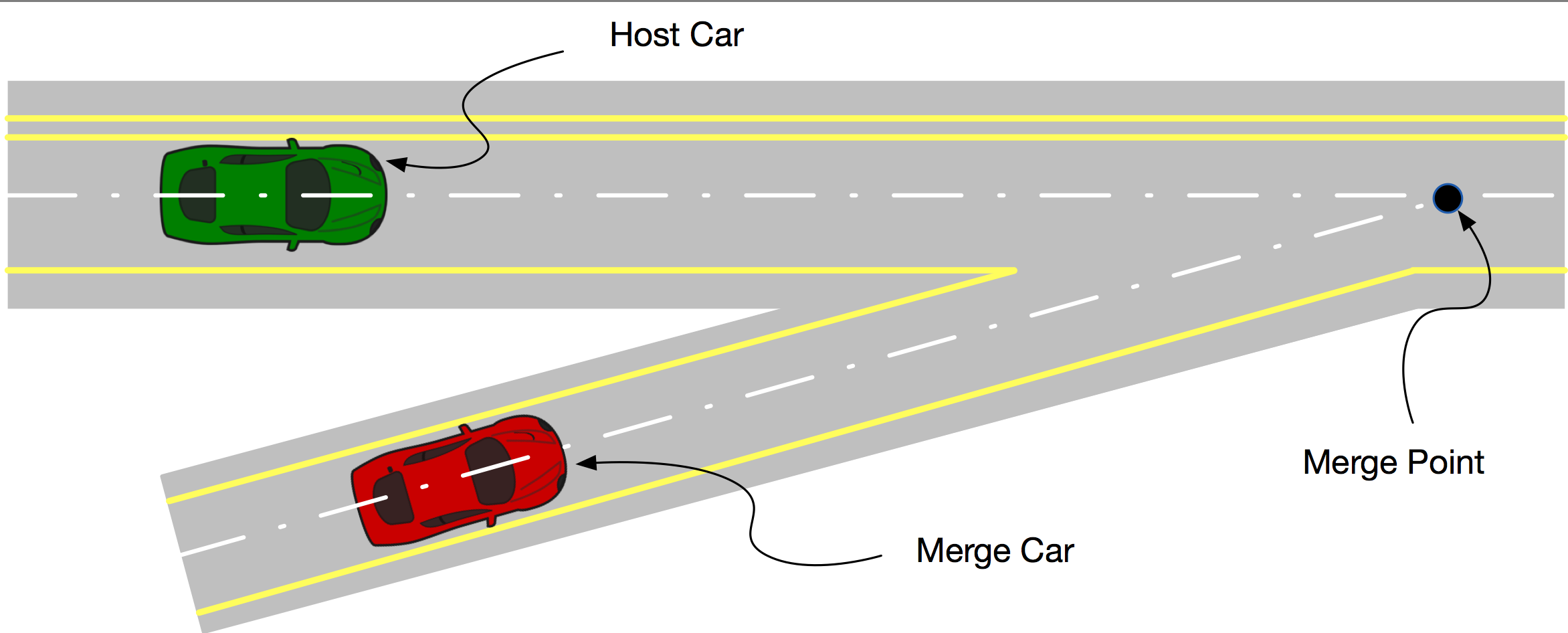}
    \caption{
    \label{scenario}
    Ramp merging scenario (Dong et al. \cite{Dong2017:PGM}). The ego vehicle (green) is an autonomous vehicle, running on the main road; the merging vehicle (red) is a human driven car, running on the ramp. The ego vehicle and the merging vehicle are interchangeable.} 
\end{figure}

\section{Related Work}

In the area of behavior planning and control of autonomous driving, learning-based methods are widely used. Dong et al. \cite{Dong2017:PGM} proposed a Probabilistic Graphical Model-based method to help the ego vehicle decide whether to yield or not in ramp merging, followed by distance-keeping control. Chen \cite{Chen-2018-106042} presents a framework to learn a maneuver controller for autonomous vehicles, which uses LSTM-based methods and deep reinforcement learning for behavior prediction and learning. Nishitani et al. \cite{9197559} introduced a vehicle controller using deep reinforcement learning to improve the merging efficiency while tracking the expected vehicle speed. However, learning-based methods cannot provide a provably correct safety guarantee.

The CBF-based method was initially proposed by Wieland and Allower \cite{inproceedings} in 2007 to describe an admissible control space that renders forward invariance of a safe set. Aaron et al. \cite{7040372} extended CBF to a minimally restrictive setting and applied it to the ACC and lane keeping problem.
CBF-related methods have recently become increasingly popular in different control scenarios. Notomista et al. \cite{9196757}, Zeng et al. \cite{zeng2020safetycritical} and Son et al. \cite{9029446} proposed CBF-based control methods specifically for the two-car racing scenario. Due to the special property of the problem setup, conservativeness is minimized in control effort in order to attain the strongest racing performance. Therefore, as discussed in \cite{9196757}, the methods are not applicable to everyday traffic scenarios, where balance between aggressiveness and conservativeness needs to be maintained for the sake of safety and efficiency.

The validity of using CBF-based methods to achieve safe control has been proved by  \cite{9196757}, \cite{zeng2020safetycritical}, \cite{9029446}, \cite{8917473} and Ames et al. \cite{7782377} in their work. However, none of them gave an explicit quantitative analysis of the solution feasibility condition. \cite{7857061} mentions that the solution feasibility can be guaranteed by assuming that, in the worst case, making all robots decelerate to zero velocity immediately at the next time step can always prevent collision, and therefore the feasible solution space will always be non-empty. However, a more principled scheme with explicit theoretical grounding is desirable to automatically decide whether the vehicle needs to apply full braking before it is too late. Xiao et al. \cite{xiaocdc2020} used machine learning techniques to identify the feasibility of deterministic CBF constraints under fixed parameterization. In this paper, we consider adaptive parameterization of chance-constrained probabilistic CBF constraints to identify as well as modify the feasibility to achieve desired merging behaviors.

To address model uncertainty, several works
(\cite{luo2020multi}, \cite{9196757} and \cite{nikolay2020l4dc}) proposed to employ the CBF approach with noisy system dynamics. However, their works either assumed that the uncertainty is bounded, which could limit the probability distribution, or could not provide a general consistent solution feasibility guarantee. On the other hand, integrating CBF with Model Predictive Control (MPC) is also a common planning method.  Zeng et al. \cite{zeng2020safetycritical} and Son et al. \cite{9029446} introduced MPC-based safety-critical control. However, these works fail to take different driving behavior styles into account and are not able to consider various safe driving strategies.
In this paper, we propose a bi-level optimization-based control framework with assured safety to account for different driving styles and a feasibility guarantee.

\section{Method}
\label{method}

\subsection{Background on Control Barrier Functions}

A Control Barrier Functions (CBF) \cite{ames2019control} is used to define an admissible control space for safety assurance of dynamical systems.
One of its important properties is its forward-invariance guarantee of a desired safety set. Consider the following nonlinear system in control affine form:
\begin{equation}\label{eq:nonlinear}\footnotesize
    \dot x = f(x)+g(x)u
\end{equation}
where $x\in \mathcal{X}\subset \mathbb{R}^n$ and $u\in\mathcal{U}\subset \mathbb{R}^m$ are the system state and control input with $f$ and $g$ assumed to be locally Lipschitz continuous.
A desired safety set $x\in\mathcal{H}$ can be denoted by the following safety function: 
\begin{equation}\label{eq:safeset_general}\footnotesize
\mathcal{H} =\{x \in \mathbb{R}^n : h(x)\geq 0\}
\end{equation}
Thus the control barrier function for the system to remain in the safety set can be defined as follows \cite{ames2019control}:
\begin{definition}
(Control Barrier Function) Given a dynamical system (\ref{eq:nonlinear}) and the set $\mathcal{H}$ defined in (\ref{eq:safeset_general}) with a continuously differentiable function $h:\mathbb{R}^n\rightarrow \mathbb{R}$, then $h$ is a control barrier function (CBF) if there exists an extended class $\mathcal{K}_{\infty}$ function for all $x\in \mathcal{X}$ such that 
\begin{equation}\label{eq:cbf_def}\footnotesize
    \sup_{u\in\mathcal{U}} \ \{L_f h(x)+L_g h(x) u\}\geq -\kappa \big(h(x)\big)
\end{equation}
\end{definition}
\noindent
where $\dot{h}(x,u)=L_f h(x)+L_g h(x) u$ with $L_f h, L_g h$ as the Lie derivatives of $h$ along the vector fields $f$ and $g$.
Similar to \cite{7857061}, in this paper we use the particular choice of extended class $\mathcal{K}_{\infty}$ function with the form as $\kappa (h(x))=\alpha h(x)$ where $\alpha\geq 0$ is a CBF design parameter controlling system behaviors near the boundary of $h(x)=0$. Hence, the admissible control space in Eq.~\ref{eq:cbf_def} can be redefined as 
\begin{equation}\label{eq:cbf}\footnotesize
    \mathcal{B}(x)=\{u\in\mathcal{U}:\dot{h}(x,u) + \alpha h(x)\geq 0\; \}
\end{equation}
It is proved in \cite{ames2019control} that any controller $u\in\mathcal{B}(x)$ will render the safe state set $\mathcal{H}$ forward-invariant, i.e., if the system (\ref{eq:nonlinear}) starts inside the set $\mathcal{H}$ with $x(t=0)\in \mathcal{H}$, then it implies $x(t)\in\mathcal{H}$ for all $t>0$ under controller $u\in\mathcal{B}(x)$.

In this paper, we consider the particular choice of pairwise vehicle safety function $h^s_{em}(x)$, safety set $\mathcal{H}^s$, and admissible safe control space $\mathcal{B}^s(x)$ as follows.
\begin{equation}\label{eq:multi_safety}\footnotesize
\begin{split}
    \mathcal{H}^s&=\{x\in\mathcal{X}: \; h^s_{em}(x) = ||x_e-x_m||^2-R_{safe}^2\geq 0, \;\forall m\} \\
    \mathcal{B}^s(x)&=\{u\in\mathcal{U}: \; \dot{h}^s_{em}(x,u)+ \alpha h^s_{em}(x)\geq 0, \;\forall m\}
\end{split}
\end{equation}
where $x_e,x_m$ are the positions of ego vehicle $e$ and each merging vehicle $m$ with $R_{safe}\in\mathbb{R}$ as the minimum allowed safety distance between pairwise vehicles.

\subsection{Problem Statement}

In this section, the problem formulation in the ramp merging scenario is introduced (Fig. \ref{scenario}). The goal is to control the ego vehicle (host vehicle) on the main road to merge safely with the human-driven vehicles (merge cars) on the ramp with motion uncertainty. The system dynamics of a vehicle can be described by double integrators as follows, since acceleration plays a key role in the safety considerations. 

\begin{equation} \footnotesize
\begin{split}
    \dot{X} &=\begin{bmatrix}
    \dot{x}\\
    \dot{v}
    \end{bmatrix}
    =\begin{bmatrix}
    0_{2\times2}\; I_{2\times2}\\
    0_{2\times 2} \;0_{2\times 2}
    \end{bmatrix}
    \begin{bmatrix}
    x \\
    v
    \end{bmatrix}
    + \begin{bmatrix}
    0_{2\times2}\; I_{2\times2}\\
    I_{2\times2}\; 0_{2\times2}
    \end{bmatrix}\begin{bmatrix}
    u\\ \epsilon
    \end{bmatrix} \\
\end{split}
\label{dynamics}
\end{equation}
where $x\in\mathcal{X}\subset\mathbb{R}^2,v\in \mathbb{R}^2$ are the position and linear velocity of each car respectively and $u\in\mathbb{R}^2$ represents the acceleration control input. 
$\epsilon \sim \mathcal{N}(\hat{\epsilon}, \Sigma)$ is a random Gaussian variable with known mean $\hat{\epsilon}\in\mathbb{R}^2$ and variance $\Sigma\in\mathbb{R}^{2\times2}$, representing the uncertainty in each vehicle's motion.
We assume the human-driven merging vehicle's velocity $v_m$ and the motion uncertainty distribution of $\epsilon_m$ are known to the ego vehicle per time step with $u_m=0$. To create a nonlinear car-like vehicle model from the double integrators formulation, we can use a  kinematics mapping method similar to that in \cite{wang2019game}, which extends single integrator dynamics to car-like robots.

\subsubsection{\textbf{Quadratic Programming Problem Formulation}}

While performing safe merging with human vehicles, the ego vehicle is expected to maintain task efficiency, passing the merging point as fast as possible. Therefore, the objective function can be formulated as a quadratic programming problem for the ego vehicle with the control input $u_e$.
\begin{equation}\footnotesize
\begin{split}
      &\min_{u_e\in\mathcal{U}_e} || u_e-\Bar{u}||^2 \\
    s.t \quad & {U}_{min} \leq  u_e \leq {U}_{max} \\
     &\Pr\Big(\dot{h}^s_{em}(x, u) +\alpha h^s_{em}(x)\geq 0\Big) \geq \eta,\qquad \forall m
\end{split}
\label{qp}
\end{equation}
where $\Bar{u}$ is the nominal expected acceleration for the ego vehicle to follow, and $U_{max}$ and $U_{min}$ are the ego vehicle's maximum and minimum allowed acceleration. We assume $\Bar{u}$ is computed by a higher-level planner, for example, a behavior planner. $R_{s}$ is the minimum allowed distance between two vehicles to avoid collision for safety. 
Different from most existing CBF work with deterministic perfect model information \cite{7857061, celi2019deconfliction}, the stochastic model in Eq.~\ref{dynamics} leads to infinite support of $\dot{h}^s_{em}$ and hence we consider the chance-constrained optimization problem to accommodate uncertainty with $\eta\in(0,1)$ as the desired confidence of probabilistic safety. $\text{Pr}(\cdot)$ denotes the probability of a condition to be true. We employ the chance constraints over vehicle controller $u_e$ to ensure the resulting lower-bounded probability of vehicles being collision-free. This is due to the fact that $\Pr\Big(\dot{h}^s_{em}(x, u) +\alpha h^s_{em}(x)\geq 0\Big) \geq \eta\implies \Pr\Big(h^s_{em}(x)\geq 0\Big) \geq \eta$ given the forward invariance set theory in a deterministic setting in Eq.~\ref{eq:cbf}: $\dot{h}(x, u) +\alpha h(x)\geq 0\implies h(x)\geq 0$ as proved in \cite{ames2019control}.

\subsubsection{\textbf{Bi-level Optimization Problem Formulation}}

In most prior CBF work, the CBF design parameter $\alpha$ in Eq.~\ref{eq:cbf} that determines a particular safe behavior of the robots is often pre-defined and remains fixed, e.g. $\alpha=0$ indicates overly restrictive robot motions for a non-decreasing $h^s_{em}(x)$ while a larger $\alpha$ could yield a more permissive control space. However, a fixed $\alpha$ could make Eq.~\ref{qp} not solvable under certain circumstances \cite{7857061}, i.e., an empty set of $\mathcal{B}^s(x)$, and hence no longer ensure forward-invariant safety. One of the main contributions of our work is formulating the original problem Eq.~\ref{qp} as the following bi-level optimization process with two layers: one for optimization over $u_e$, and the other one for optimization over $\alpha$ for feasibility guarantee.

\begin{equation}\footnotesize
\begin{split}
&\min_{u_e\in\mathcal{U}_e, \alpha\in\mathcal{A}} || u_e-\Bar{u}||^2 \\
    s.t \quad & {U}_{min} \leq  u_e \leq {U}_{max} \\
     &\Pr\Big(\dot{h}^s_{em}(x, u) +\alpha h^s_{em}(x)\geq 0\Big) \geq \eta,\qquad \forall m\\
     &\alpha = \argmin_{\alpha \in \mathcal{A}}\{||\alpha - \Bar{\alpha}||^2\}
\end{split}
\label{bi-level}
\end{equation}
where $\mathcal{A}$ is the feasible set for $\alpha$ that will be proved to ensure solution feasibility of $u_e$. $\Bar{\alpha}$ is a nominal value from the user to specify the desired conservativeness of the safe behavior. The feasible set $\mathcal{A}$ changes over time and to ensure solution feasibility of $u_e$, the goal is to ensure the set $\mathcal{A}$ is consistently non-empty so that we can always find an $\alpha$ that causes a nonempty set of $u_e$ (if it exists) to satisfy the safety constraint. Details will be covered in the following sections. 

\subsection{Active and Feasible Condition of CBF in a Probabilistic Setting}

This section will first present a novel approach to CBF with probabilistic safety consideration under uncertainty and discuss the feasibility analysis with the CBF constraints. 
\begin{theorem}
Given a stochastic dynamical system in Eq. \ref{dynamics} and a confidence level $\eta\in(0,1)$, the following admissible control space $\mathcal{B}^s_{\eta}(x)$ ensures a chance-constrained safety condition in Eq. \ref{bi-level} for the ego vehicle with each merging car $m$.
\begin{equation}\label{eq:prsbc}\footnotesize
    \mathcal{B}_{\eta}^s(x)=\{u_e\in\mathcal{U}_e: \; A_{em}u_e\geq b_{em}, \quad \forall m\}
\end{equation}
\end{theorem}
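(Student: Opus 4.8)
The plan is to convert the single probabilistic chance constraint into an equivalent deterministic affine inequality in $u_e$, from which the claimed half-space description $A_{em}u_e\ge b_{em}$ can be read off directly. First I would substitute the stochastic dynamics (\ref{dynamics}) into the constraint function $Y := \dot{h}^s_{em}(x,u)+\alpha h^s_{em}(x)$. Using $h^s_{em}(x)=\|x_e-x_m\|^2-R_{safe}^2$ and differentiating along (\ref{dynamics}), the relative-position vector $x_e-x_m$ pairs with the relative velocity/acceleration and disturbance terms so that $Y$ becomes an affine function of the control $u_e$ and of the Gaussian disturbances $\epsilon_e,\epsilon_m$. Writing $d:=2(x_e-x_m)$, the only randomness enters through the linear term $d^{\top}(\epsilon_e-\epsilon_m)$, with $u_e$ appearing in the deterministic part.

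The key observation is then that $Y$ is itself a scalar Gaussian random variable, since an affine map of jointly Gaussian vectors is Gaussian. Concretely, with $\epsilon_e\sim\mathcal{N}(\hat\epsilon_e,\Sigma_e)$ and $\epsilon_m\sim\mathcal{N}(\hat\epsilon_m,\Sigma_m)$ independent, I would compute the mean $\mu_{em}(u_e)$, which is affine in $u_e$, and the variance $\sigma_{em}^2=d^{\top}(\Sigma_e+\Sigma_m)d$, which does not depend on $u_e$ because $u_e$ is deterministic. This reduces the problem to characterizing $\Pr(Y\ge 0)\ge\eta$ for a Gaussian whose mean is steered by $u_e$.

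Next I would standardize and invert the Gaussian tail. Letting $\Phi$ denote the standard normal CDF, $\Pr(Y\ge 0)=\Phi\!\left(\mu_{em}(u_e)/\sigma_{em}\right)$, so the chance constraint is equivalent to $\mu_{em}(u_e)/\sigma_{em}\ge\Phi^{-1}(\eta)$, i.e. $\mu_{em}(u_e)\ge\sigma_{em}\,\Phi^{-1}(\eta)$; here I use the strict monotonicity (hence invertibility) of $\Phi$ together with $\sigma_{em}>0$ whenever the vehicles are not coincident. Since $\mu_{em}(u_e)$ is affine in $u_e$, collecting the coefficient of $u_e$ into $A_{em}$ and moving the remaining terms (the constant part of $\mu_{em}$ and $\sigma_{em}\Phi^{-1}(\eta)$) to the right-hand side yields exactly $A_{em}u_e\ge b_{em}$, establishing $\mathcal{B}^s_\eta(x)=\{u_e:A_{em}u_e\ge b_{em},\ \forall m\}$. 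Finally, because the implication recorded below (\ref{qp}) shows the one-step chance constraint propagates forward invariance of $\mathcal{H}^s$ with confidence $\eta$, membership in this half-space delivers the stated chance-constrained safety.

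I expect the main obstacle to be the first step: verifying that $u_e$ genuinely enters the deterministic part of $Y$ with a usable coefficient (so that $A_{em}$ is meaningful) once the double-integrator dynamics are carried through, while simultaneously confirming that the disturbance enters only linearly, so that \emph{exact} Gaussianity — rather than a conservative bound — is available. The inversion of $\Phi$ is routine given monotonicity, but care is needed where $\sigma_{em}$ degenerates (vehicles nearly coincident) and where $\Phi^{-1}(\eta)$ changes sign as $\eta$ crosses $1/2$, since that sign governs whether the chance constraint tightens or relaxes the nominal CBF condition.
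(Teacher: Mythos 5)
Your proposal is correct and follows essentially the same route as the paper: substitute the double-integrator dynamics so that $\dot{h}^s_{em}+\alpha h^s_{em}$ is affine in $u_e$ with the randomness entering only through the linear term $2\Delta x_{em}^{\top}\Delta\epsilon_{em}$, then convert the Gaussian chance constraint into a deterministic affine inequality via the inverse CDF and read off $A_{em}$ and $b_{em}$. The only difference is that you derive the standardization $\Pr(Y\ge 0)=\Phi(\mu_{em}/\sigma_{em})$ from first principles (and flag the degenerate-$\sigma_{em}$ and sign-of-$\Phi^{-1}(\eta)$ caveats), whereas the paper invokes the same conversion as a cited lemma from Blackmore et al.
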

\begin{proof}
First, consider the CBF constraint in Eq. \ref{eq:multi_safety} and by substituting Eq. \ref{dynamics}, We have:
\begin{equation} \footnotesize
\begin{split}
     &\dot{h}^s_{em}(x,u) +\alpha h^s_{em}(x) \geq 0 \\
 \implies& 2\Delta x_{em}^T\Delta \epsilon_{em} \geq -2 \Delta x_{em}^T(\Delta v_{em}+u_e\Delta t) - \alpha h^s_{em}(x) 
\end{split}
\label{eq5}
\end{equation}
where $\Delta x_{em} = x_e-x_m, \Delta v_{em}=v_e-v_m, \Delta \epsilon_{em}=\epsilon_e-\epsilon_m\sim \mathcal{N}(\Delta \hat{\epsilon}_{em}, \Delta \Sigma_{em})$ for ego vehicle $e$ and each merging vehicle $m$. To tackle the probabilistic version with Pr$(\dot{h}^s_{em}(x) +\alpha h^s_{em}(x)\geq 0)\geq \eta$, we consider Eq.~\ref{eq5} as a chance constraint regarding the Gaussian random variable $\Delta \epsilon_{em}$. From \cite{blackmore2011chance}, a general chance constraint problem takes the following form for an inequality $a^T c \leq b,  a \sim \mathcal{N}(\Bar{a}, \Sigma)$ and can be transformed to a deterministic constraint as 
\begin{equation}\label{eq:chance}\footnotesize
\begin{split}
        &\Pr(a^T c \leq b) = \Phi(\frac{b-\Bar{a}^T c}{\sqrt{c^T\Sigma c}})\\
        \implies &\Pr(a^T c \leq b) \geq \eta \Leftrightarrow b-\Bar{a}^Tc\geq\Phi^{-1}(\eta)||\Sigma^{1/2}c||^2
\end{split}
\end{equation}
where $\Phi^{-1}(\cdot)$ is the inverse cumulative distribution function (CDF) of the standard zero-mean Gaussian distribution with unit variance.
Hence we reorganize Eq.~\ref{eq5} into the form of Eq.~\ref{eq:chance} with $a=\Delta \epsilon_{em}, c=-2\Delta x_{em}, b =2\Delta x_{em}^T (\Delta v_{em}+u_e\Delta t) + \alpha h^s_{em}(x)$ and eventually get:
\begin{equation}\footnotesize
\begin{split}
        A_{em}u_e \leq b_{em},\quad A_{em}\in\mathbb{R}^{1\times 2}, b_{em}\in\mathbb{R}
\end{split}
\end{equation}
with
\begin{equation}\footnotesize
    \begin{split}
         A_{em} =& -2\Delta x_{em}^T\Delta t \\
         b_{em} =& 2\Delta x_{em}^T (\Delta v_{em}+\Delta \hat{\epsilon}_{em})
        +\alpha h^s_{em}(x) \\
        &-\Phi^{-1}(\eta)\sqrt{\Delta x_{em}^T \Delta \Sigma_{em}  \Delta x_{em}}
    \end{split}
    \label{ab}
\end{equation}
where $\Delta t$ is a time unit and we derived control constraints for pairwise chance-constrained safety between ego vehicle and each merging vehicle $m$. This concludes the proof. \qed
\end{proof}
Next, we will discuss the feasibility analysis of Eq.~\ref{bi-level} with safety control constraints in Eq.~\ref{eq:prsbc} and bounded control constraints. In particular, we will present how to ensure non-emptiness of $\alpha$ set $\mathcal{A}$ for non-emptiness of set $u_e\in \mathcal{B}_\eta^s(x)$ that preserves the forward-invariant safety.
Given Eq. \ref{ab}, the set $\mathcal{A}$ feasibility analysis is decomposed into two situations based on the positiveness of $A_{em}$. The active and feasible conditions of CBF depend on the overlap set between the CBF solution set and the bounded control constraints shown in Fig. \ref{feasibleset}. To simplify the discussion, here we assume $u_e=R_{\theta}a_e\in\mathbb{R}^2$ is determined by the ego vehicle's linear acceleration $a_e\in\mathbb{R}$ along the ramp and the rotation matrix $R_{\theta}\in SO(2 )$ by the road geometry. Thus we reformulate Eq.~\ref{eq:prsbc} by $A_{em}(R_{\theta}a_e)\leq b_{em}$ and redefine $A_{em}=A_{em}R_{\theta}\in\mathbb{R},u_e = a_e\in\mathbb{R}$.

\begin{figure}
    \centering
    \includegraphics[width = 0.6\linewidth]{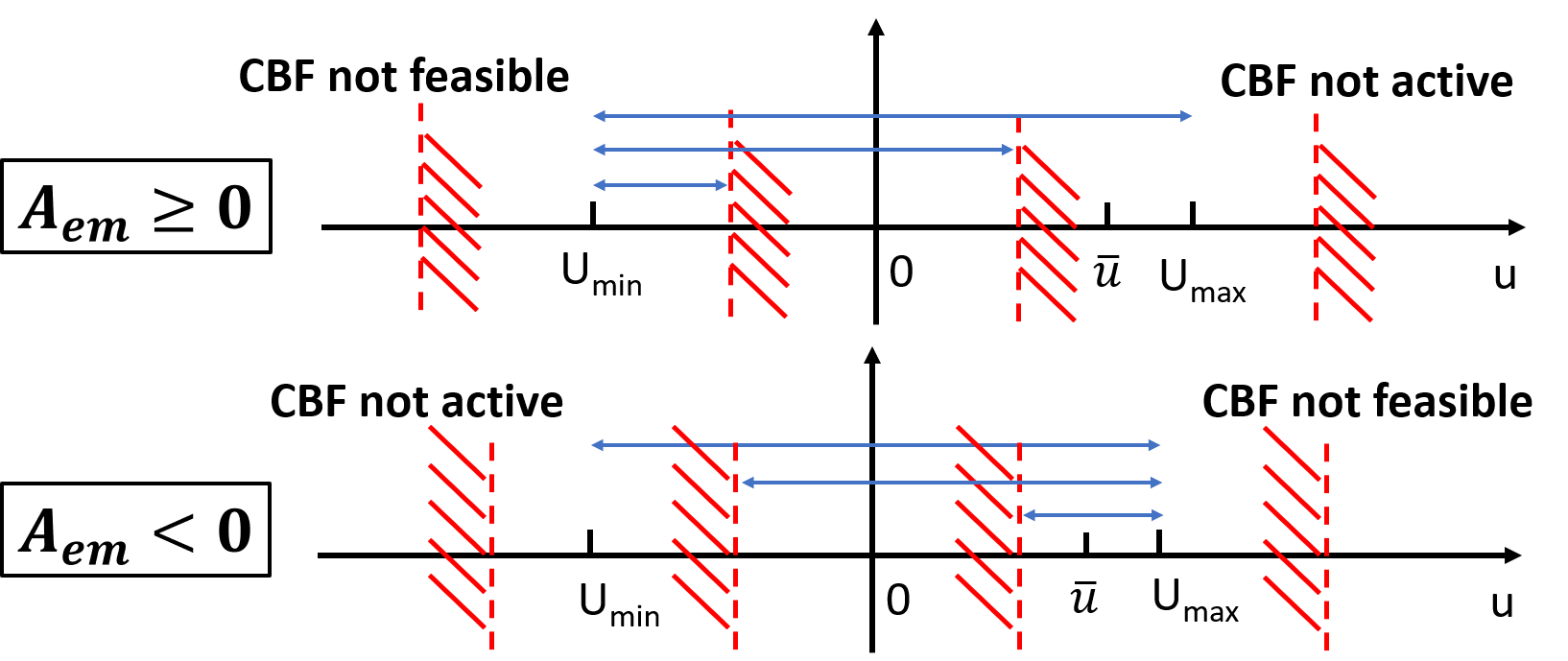}
    \caption{
    \label{feasibleset}
   Illustration of CBF solution feasibility. The red dashed lines stand for the value of $K_1$ and $K_2$ in two cases. Blue intervals represent the reducing solution set while $\alpha$ keeps decreasing.}
\end{figure}

\textbf{Case 1: $A_{em} \geq 0$}.
We have $u_e \leq K_1 = A_{em}^{-1}b_{em}$, which provides an upper bound for $u_e$, meaning the acceleration cannot exceed $K_1$. While the value of $\alpha$ is decreasing and all the other terms remain the same, $K_1$ is also decreasing. Imagine $K_1 = A_{em}^{-1}b_{em}$ as the red vertical dashed line shown in Fig. \ref{feasibleset}, moving from right to left on the 1-D axis of $u_e$. The feasible space resulting from this CBF constraint is anywhere on the left-hand side of the dashed line. The blue intervals represent the feasible solution set while the dashed line is moving. When $\alpha$ is very large, the dashed line $K_1$ is on the right-hand side of the maximum allowed acceleration $U_{max}$. Then the space between $U_{min}$ and $U_{max}$ is feasible, since both the points fall on the left-hand side of the dashed line. The situation changes at a certain point, where $K_1$ overlaps with $U_{max}$, and CBF is activated. The dashed line $K_1$ will keep moving to the left until it overlaps with $U_{min}$, where infeasibility is about to happen due to the empty overlap interval of the two constraints.

\textbf{Case 2: $A_{em} < 0$}.
We have $u_e \geq K_2 = A_{em}^{-1}b_{em}$, which provides a lower bound for $u_e$, meaning the acceleration must be greater than $K_2$. While the value of $\alpha$ is decreasing and all the other terms remain the same, $K_2$ is increasing given the fact $A_{em} < 0$. Again, imagine $K_2 = A_{em}^{-1}b_{em}$ as the red vertical dashed line moves from left to right on the 1-D axis of $u_{e}$. The feasible space resulting from this CBF constraint is anywhere on the right-hand side of the dashed line. When $\alpha$ is very large, the dashed line $K_2$ is on the left-hand side of minimum allowed acceleration $U_{min}$. Then the space between $U_{min}$ and $U_{max}$ is feasible, since both the points fall on the left-hand side of the dashed line. The situation changes at a certain point, where $K_2$  overlaps with $U_{min}$. From this point, the CBF constraint becomes active in optimization while $\alpha$ keeps decreasing. The dashed line $K_2$ will keep moving to the right until it overlaps with $U_{max}$, where infeasibility is about to happen due to the empty overlap interval of the two constraints.

In both cases, after the feasible solution set is determined by updated information on $\alpha$, the optimal control input $u_e$ will be chosen based on the objective function definition. Here, the optimal solution is the closest point from $\Bar{u}$ in the feasible solution set. 

In conclusion, the boundary conditions of the CBF are:
\begin{equation}\footnotesize
    \begin{split}
       A_{em} \geq 0: \quad  &\alpha^m_{feasible} = M_m U_{min}+N_m, \alpha_{active} = M_m U_{max}+N_m \\
      A_{em} < 0: \quad &\alpha^m_{feasible} = M_m U_{max}+N_m, \alpha_{active} = M_m U_{min}+N_m
    \end{split}
\end{equation}
where
\begin{equation}\footnotesize
    \begin{split}
        M_m &= \frac{A_{em}}{h^s_{em}}, N_m = \frac{T_m}{h^s_{em}}\\
       T_m &=- 2\Delta x_{em}^T (\Delta v_{em}+\Delta \hat{\epsilon}_{em})
        +\Phi^{-1}(\eta)\sqrt{\Delta x_{em}^T \Delta \Sigma_{em}  \Delta x_{em}}
    \end{split}
\end{equation}
Generally, the larger $\alpha$ is, the more admissive action space the vehicle will have. To better understand the boundary conditions we derived, intuitively, when the ego vehicle approaches the merging vehicle from behind, the CBF constraint will not be activated until it accelerates and the distance between the two vehicles decreases rapidly. When the ego vehicle drives in front of the merging vehicle, the CBF constraint is activated if the relative speed between the two vehicles decreases rapidly, leading to shorter relative distance. The ego vehicle must maintain at least a certain velocity to prevent the distance from continuing to decrease. The boundary conditions can be seen as a kind of manipulation of the original kinematics constraint, in the form of shrink and shift.

\subsection{Consistent Solution Feasibility Guarantee}
In the previous section, the relationship between $\alpha$ and the solution feasibility was analyzed, and explicit feasible conditions on $\alpha$ were given. Here, a Safe Adaptive Algorithm (Algorithm 1) is introduced for guaranteed solution feasibility. For time steps 1 to N, at each time step $t$, $u_e^t$ is calculated through the first-layer optimization. Then given states of both vehicles, $\alpha_{fea}^{t+1}$ is calculated to ensure the feasible solution set $\mathcal{B}_{\eta}^s(x)$ is non-empty at $t+1$. The second-layer optimization is performed and $\alpha$ is updated at each iteration. The advantage of this algorithm, compared to fixing the $\alpha$ value, is that it provides a dynamic solution feasibility guarantee at run time.

\begin{remark}The problem can still be infeasible with our proposed method if the initial conditions make it impossible to ensure safety, e.g. the ego vehicle is driving too fast, and it's already too late to avoid collision, and no matter what $\alpha$ we choose, Eq. \ref{eq:cbf} can never be satisfied. However, the proposed method does guarantee solution feasibility as long as such a solution exists.\end{remark}

\begin{algorithm}\footnotesize
\caption{Safe Adaptive Merging Algorithm}
\begin{algorithmic}
\REQUIRE $\Delta x_{em}, \Delta v_{em}, \Delta t, R_{safe}, a_m, \Bar{\alpha}$
\ENSURE $\alpha,u_e$
\FOR{$t=1:N$}
\STATE compute $A_{em}^t,b_{em}^t$
\STATE$u_e^t = \argmin_{u_e} ||u_e^t-\Bar{u}||^2$
\STATE compute $A_{em}^{t+1}, M_{em}^{t+1},N_{em}^{t+1}$ via forward kinematics
\IF{$A^{t+1} \geq 0$}
\STATE $\alpha_{fea}^{t+1} = M_{em}^{t+1}U_{min}+N_{em}^{t+1}$
\ELSE
\STATE $\alpha_{fea}^{t+1} = M_{em}^{t+1}U_{max}+N_{em}^{t+1}$
\ENDIF
\STATE $\alpha^{t+1} = \argmin_\alpha ||\alpha - \max(\Bar{\alpha},\alpha_{fea}^{t+1})||^2,\forall m$
\ENDFOR
\end{algorithmic}
\end{algorithm}

\section{Experiment \& Discussion}
\label{experiment}

\subsection{Validity test}
To prove that the proposed method is valid, we conduct experiments against one merging vehicle, with randomly generated ego vehicle initial conditions, including position and velocity and desired driving strategy $\alpha$, and observe the resulting collision rate. The results are shown in Fig. \ref{validity}. The black dashed line stands for the minimum allowed safety distance $R_{safe}$, which is set to be 8m. The confidence level $\eta$ is set to be 99\%. From Fig. \ref{validity}, it is observed that all 400 trials keep the minimum distance as required, and the collision rate is 0\%. The two different kinds of curve shapes correspond to two merging results: asymptotically approaching $R_{safe}$ indicates merging after the merging vehicle and increasing Euclidean distance indicates merging in front of the merging vehicle. In conclusion, no matter what the ego vehicle initial conditions are, and whatever driving strategy the ego vehicle takes, the proposed method can always ensure safety.

\begin{figure}
    \centering
    \includegraphics[width = 0.75\linewidth]{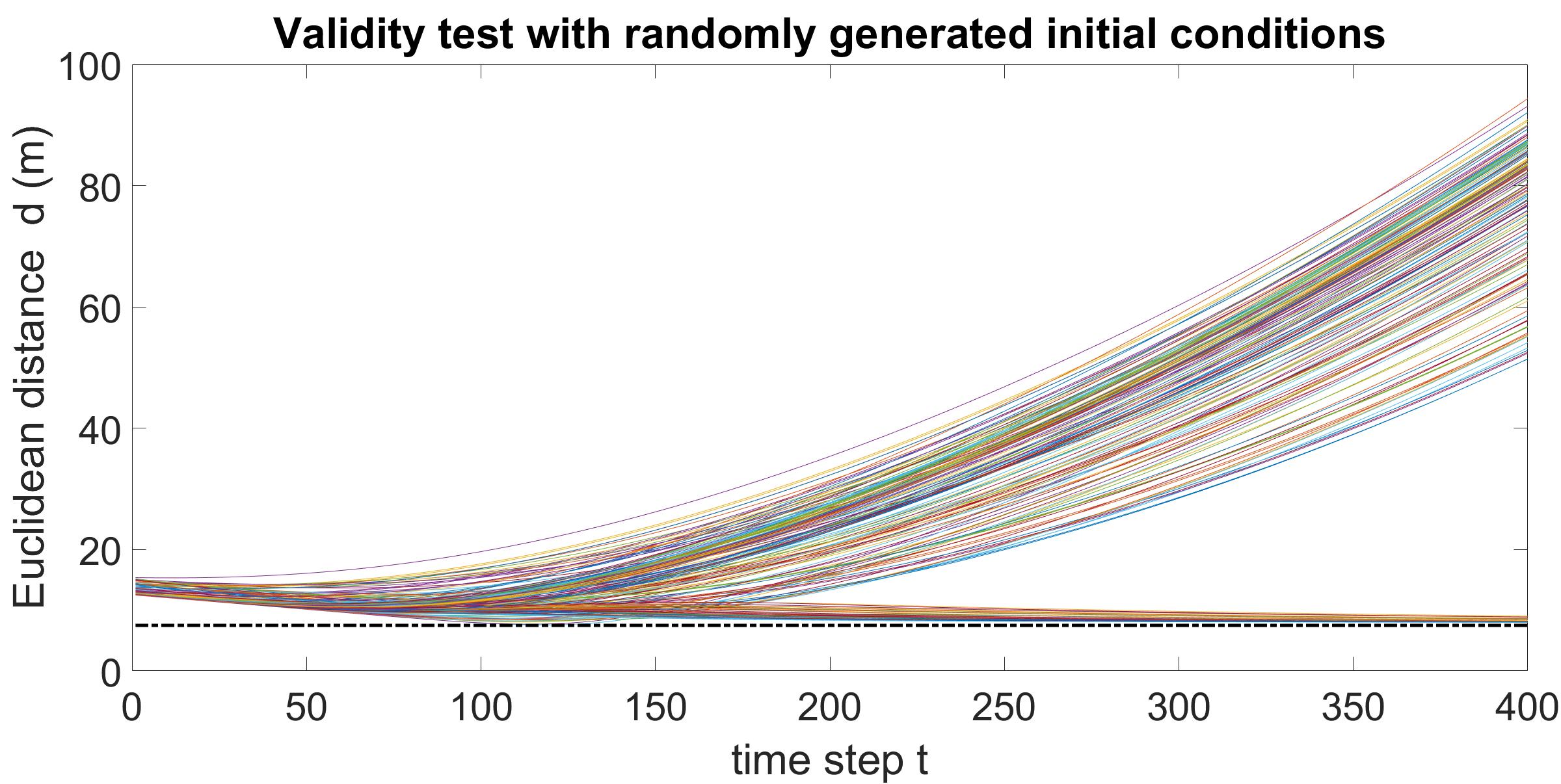}
    \caption{
    \label{validity}
   Validity test of the proposed method.}
\end{figure}

\begin{figure}
    \centering
    \includegraphics[width = 0.75\linewidth]{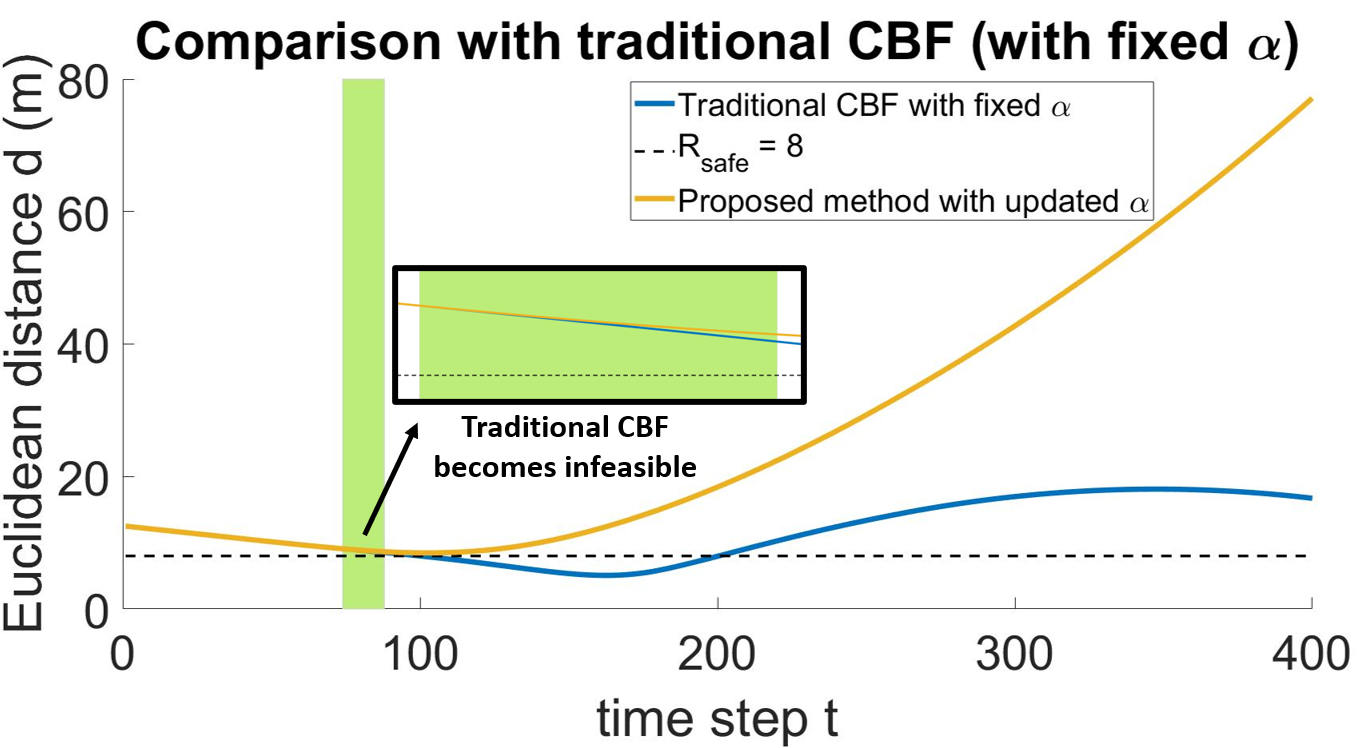}
    \caption{Comparison of the proposed method with traditional CBF with fixed $\alpha$. The green zone indicates the time interval when $\alpha$ is updated in the proposed method to guarantee solution feasibility and therefore safety. The traditional CBF solution with fixed $\alpha$ becomes infeasible starting from this interval and eventually leads to collision with distance smaller than $R_{safe}=8$.}
\end{figure}
\label{fixedalpha}

To better illustrate the advantage of the proposed method, a comparison with traditional CBF with fixed $\alpha$ is made, as shown in Fig. \ref{fixedalpha}. The proposed method updates $\alpha$ in the green zone, while traditional CBF does not, which leads to solution infeasibility from $t=77$ to $t = 194$ and violation of the minimum safety distance requirement. The proposed method maintains solution feasibility consistently and performs the merging safely.

\subsection{Vehicle behavior factors}

\begin{comment}
\subsubsection{\textbf{Effect of time step $\Delta t$}}

The choice of time step $\Delta t$ will affect the vehicle's behavior. The intuition behind this statement is that noise and error can be cumulative over time, therefore, the finer the time difference between each time step, the better the approximation of system dynamics will be.

\end{comment}

\subsubsection{\textbf{Effect of the CBF parameter $\alpha$}}

The choice of the CBF parameter $\alpha$ is a key factor in shaping a vehicle's behavior. As mentioned in Section \ref{method}, the larger $\alpha$ is, the more admissive action space the ego vehicle will have. 

\begin{figure}
    \centering
    \includegraphics[width = 0.8\linewidth]{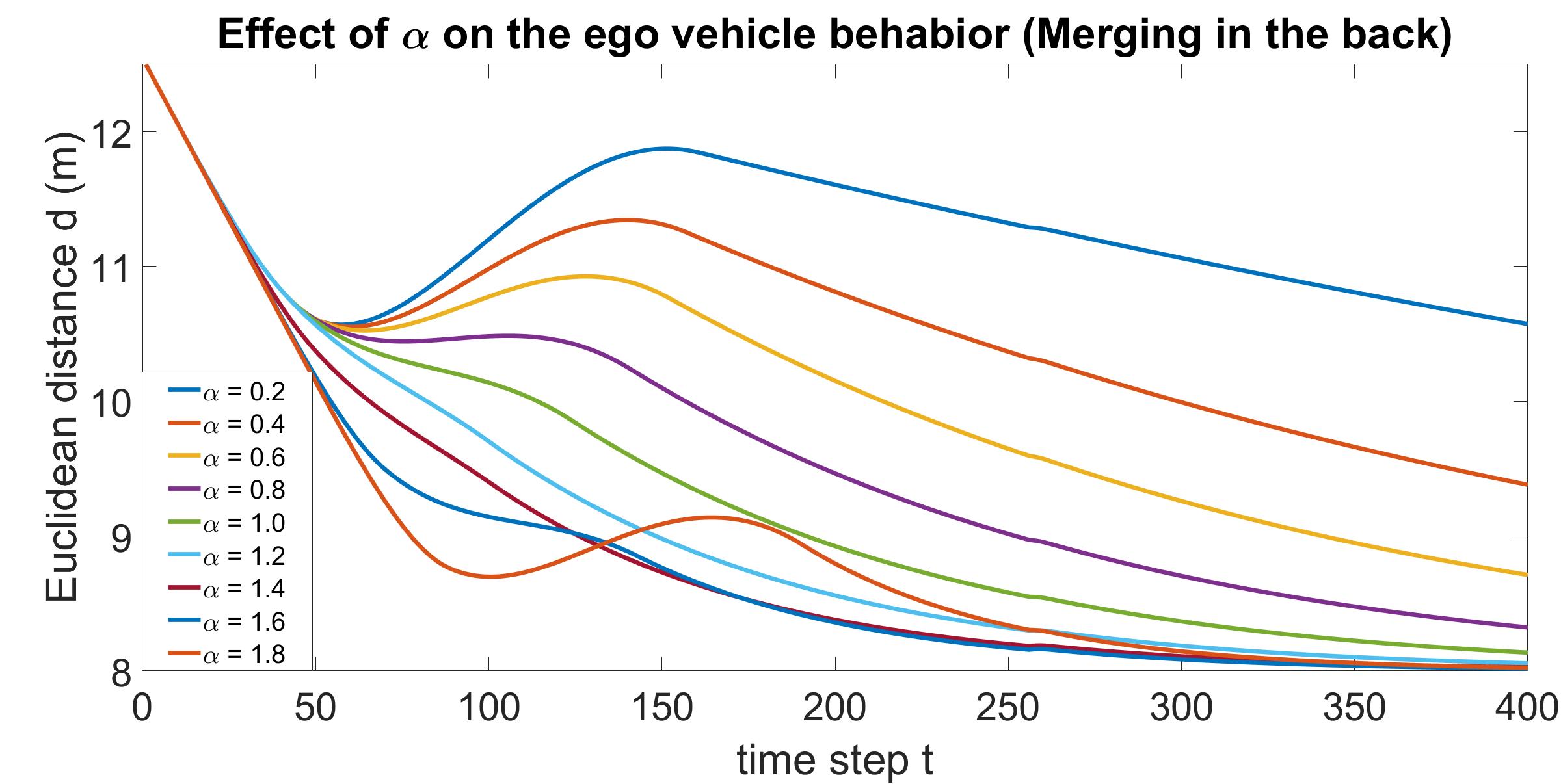}
    \caption{
    \label{alpha-comparison1}
   How $\alpha$ affects the ego vehicle's behavior: While the ego vehicle merges behind, the smaller $\alpha$ is, the earlier the ego vehicle will brake to keep the distance strictly. Larger $\alpha$ will allow the ego vehicle to approach the merging vehicle more quickly, and to brake as late as possible.}
\end{figure}

\begin{figure}
    \centering
    \includegraphics[width = 0.8\linewidth]{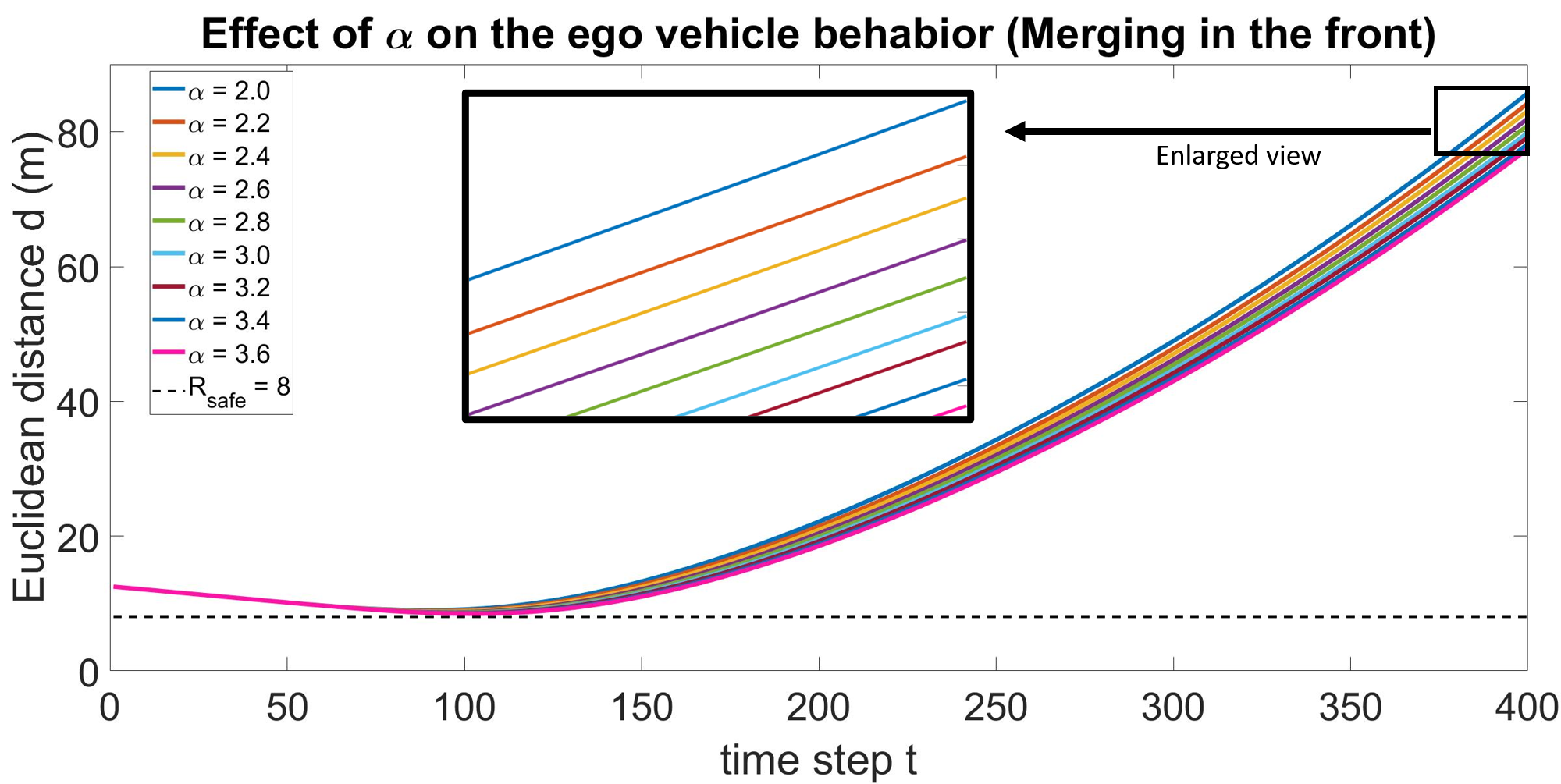}
    \caption{
    \label{alpha-comparison2}
   How $\alpha$ affects the ego vehicle's behavior: While the ego vehicle passes the merging vehicle around $t=90$ in Fig. \ref{alpha-comparison2},  smaller $\alpha$ makes the ego vehicle accelerate as early as possible to prevent getting too close to the merging vehicle in the future, and larger $\alpha$ tends to make the ego drive with $\Bar{u}$ as long as possible while getting closer to the merging vehicle, and only increases acceleration when necessary.}
\end{figure}

To verify this statement, ego vehicle behaviors with different $\alpha$ values are compared. For better visualization effect, we observe ego vehicle merging control with a single merging vehicle, while the initial conditions of the ego vehicle and the profile of the merging vehicle are kept the same to make the comparison fair. The minimum allowed safety distance $R_{safe}$ is set to 8 m. The result is shown in Fig. \ref{alpha-comparison1} and Fig. \ref{alpha-comparison2}. We observe that the smaller $\alpha$ is, the more conservative the driving strategy of the ego vehicle will be, which leaves more tightly bounded action space. 

If we take a closer look at Eq. \ref{ab}, we find that given the same states of both vehicles at time step $t$, decreasing the value of $\alpha$ results in a smaller matrix $b$. Regardless of whether $A$ is positive or negative, $Au_e \leq b$ provides a narrower solution space for $u_e$. This agrees with our observations. Before $t=40$ (Fig. \ref{alpha-comparison1}) and $t=100$ (Fig. \ref{alpha-comparison2}), all trials share exactly the same states, meaning the CBF is not active yet, and therefore the value of $\alpha$ does not make any difference in the ego vehicle's behavior. After those times, the ego vehicle decides to decelerate or accelerate at a certain point. The smaller $\alpha$ is, the earlier the deceleration or acceleration decision is made to avoid collision in future steps. In other words, the larger $\alpha$ is, the more aggressive the driving strategy is for the ego vehicle, and the deceleration as a precautionary action is more and more delayed. This finding also agrees with the experiment results from \cite{7795595}.

\subsubsection{\textbf{Effect of initial conditions}}

The initial conditions also affect the merging behavior of the ego vehicle. Consider the merging control with two merging vehicles, shown in Fig. \ref{case position}. From the high-level decision-making standpoint, the ego vehicle has three merging options: merging in front of $m_2$, merging in between $m_1$ and $m_2$, and merging behind $m_1$. Since the actions of the merging vehicles are not controllable by the ego vehicle, whether the ego vehicle can freely choose to merge into any of the three slots depends on the initial conditions, including relative distance and speed, as well as the nominal acceleration $\Bar{u}$ set for the ego vehicle to follow and the minimum allowed safety distance $R_{safe}$. 
\begin{comment}
\begin{figure}
    \centering
    \includegraphics[width = 0.4\linewidth]{rsafe_limit.png}
    \caption{The value of $R_s = R_{safe}$ can affect the merging behavior.}
\end{figure}
\label{rsafe}
\end{comment}

Intuitively, if $R_{safe}$ is set to be very large, the ego vehicle has to keep far enough from both of the merging vehicles, and that makes it difficult for it to squeeze into the gap between the two merging vehicles without breaking the minimum safety distance requirement.

The relative distance and relative speed also matter. Together with $\Bar{u}$, they decide the reachability set for the ego vehicle, which is the set of positions the ego vehicle can achieve. We take a look at two specific cases for detailed illustration. The experimental results are shown in Fig. \ref{different_strategy}. As the initial condition, $v_{m_2} > v_{m_1}$. The minimum allowed safety distance $R_{safe}$ is set to 5 m. The ego vehicle's initial positions for the two cases are shown in Fig. \ref{case position}.
\begin{figure}
    \centering
    \includegraphics[width = 0.55\linewidth]{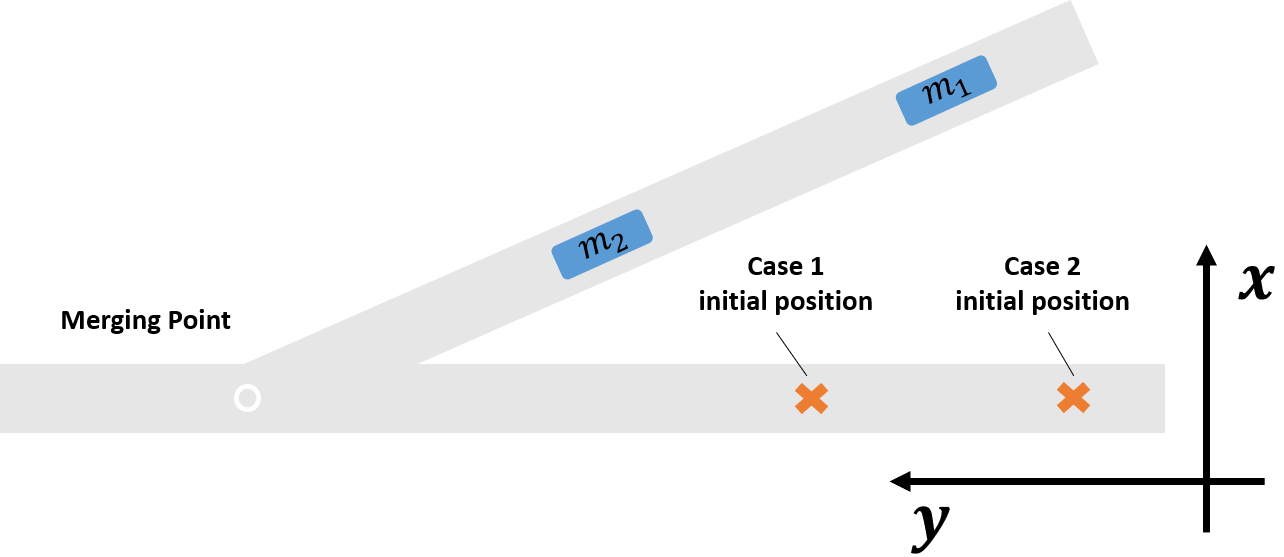}
    \caption{
    \label{case position}
    Illustration of the ego vehicle's initial positions in two cases, where $m_1,m_2$ represent the first and the second merging vehicle.}
\end{figure}

\begin{figure}
    \centering
    \includegraphics[width = 0.9\linewidth]{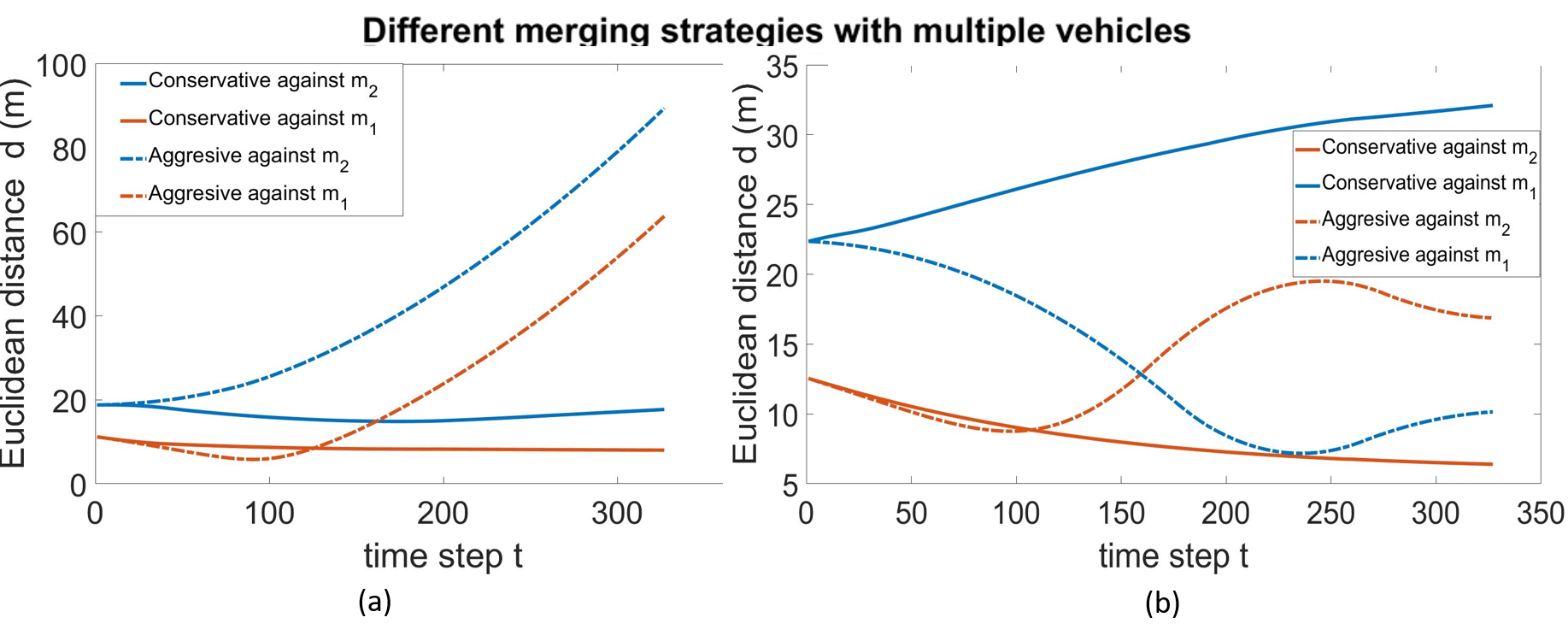}
    \caption{
    \label{different_strategy}
    Comparison of different merging strategies for case 1 (a) and case 2 (b). The x-axis is the time step t and the y-axis is the Euclidean distance between the ego vehicle and each of the merging vehicles. Solid lines stand for the conservative merging strategy with $\alpha = 1$, and the dashed lines stand for the aggressive merging strategy with $\alpha = 15$. The behavior of the two merging vehicles remains the same in the two cases. where $m_1$ and $m_2$ have constant velocity.}
\end{figure}

 In case 1, $v_e = v_{m1}+2$. While performing conservative merging, with $\alpha$ = 1, the ego vehicle merges in between $m_1$ and $m_2$ safely. The ego vehicle keeps the distance to both of the merging vehicles without much change. While performing aggressive merging, with $\alpha =15$, the ego vehicle merges in front of $m_2$ safely. The ego vehicle accelerates obviously and completes the merging around $t=95$, and the distance to both merging vehicles increases after that.

In case 2, $v_e = v_{m_1}-2$. While performing conservative merging, with $\alpha$ = 1, the ego vehicle merges after $m_1$ safely. The ego vehicle accelerates while maintaining the required safety distance. While performing aggressive merging, with $\alpha =15$, the ego vehicle merges in between the two merging vehicles safely. The ego vehicle accelerates to pass $m_1$ and merges in between $m_1$ and $m_2$. It keeps approaching $m_2$ until it decreases its acceleration to meet the future safety guarantee.

%%%%%%%%%%%%%%%%%%%%%%%%%%%%%%%%%%%%%%%%%%%%%%%%%%%%%%%%%

\section{CONCLUSIONS}
\label{conclusion}
We present a novel adaptive merging control algorithm for autonomous driving vehicles in highway scenarios with probabilistic safety guarantee. Simulations with different conditions are used to demonstrate the power of CBF in applying different driving strategies to the ego vehicle via a single parameter $\alpha$. The problem is uniquely formulated as a chance-constrained Control Barrier Function-based bi-level optimization, which provides a theoretically consistent solution feasibility analysis with explicit bounds on the CBF parameter $\alpha$, and thus makes the future use and applications of CBF more flexible without concerns about solution infeasibility under certain circumstances. The proposed feasibility-guaranteed CBF-based method not only works in the single highway merging problem, but presents a paradigm for application to other problems, along with formulation and analysis guidelines.  
In future work, we plan to combine the proposed framework with learning-based methods that use real-world datasets to realize safe control with a data-driven approach to determine the appropriate strategy.

\bibliography{ICRA}{}
\bibliographystyle{IEEEtran}

\end{document}